\documentclass[sigconf]{acmart}
\usepackage{amsmath}
\usepackage{amsthm}
\usepackage{mathtools}
\usepackage{booktabs} 
\usepackage[ruled,linesnumbered]{algorithm2e}
\usepackage{tabulary}
\usepackage{colortbl}
\usepackage{float}
\usepackage{subcaption}
\usepackage{xcolor}
\setcopyright{none}
\settopmatter{printacmref=false}
\renewcommand\footnotetextcopyrightpermission[1]{}
\pagestyle{plain}

\begin{document}
\title{Hypergraph Clustering: A Modularity Maximization Approach}

\author{Tarun Kumar}
\authornote{Both authors contributed equally to the paper}
\affiliation{%
  \institution{Robert Bosch Centre for Data Science and AI, IIT Madras}
}
\email{tkumar@cse.iitm.ac.in}

\author{Sankaran Vaidyanathan}
\authornotemark[1]
\affiliation{%
  \institution{Robert Bosch Centre for Data Science and AI, IIT Madras}
}
\email{sankaranv8@gmail.com}

\author{Harini Ananthapadmanabhan}
\authornote{Work done while the author was at IIT Madras}
\affiliation{%
  \institution{Google, Inc.}
}
\email{harini.nsa@gmail.com}

\author{Srinivasan Parthasarathy}
\affiliation{%
  \institution{The Ohio State University}
}
\email{srini@cse.ohio-state.edu}

\author{Balaraman Ravindran}
\affiliation{%
  \institution{Robert Bosch Centre for Data Science and AI, IIT Madras}
}
\email{ravi@cse.iitm.ac.in}

\begin{abstract}
Clustering on hypergraphs has been garnering increased attention with potential applications in network analysis, VLSI design and computer vision, among others. In this work, we generalize the framework of modularity maximization for clustering on hypergraphs. To this end, we introduce a hypergraph null model, analogous to the configuration model on undirected graphs, and a node-degree preserving reduction to work with this model. This is used to define a modularity function that can be maximized using the popular and fast Louvain algorithm. We additionally propose a refinement over this clustering, by reweighting cut hyperedges in an iterative fashion. The efficacy and efficiency of our methods are demonstrated on several real-world datasets.

\end{abstract}
\keywords{Hypergraph, Modularity, Clustering}
\maketitle
\section{Introduction}

The graph clustering problem involves dividing a graph into multiple sets of nodes, such that the similarity of nodes within a cluster is higher than the similarity of nodes between different clusters. While most approaches for learning clusters on graphs assume pairwise (or dyadic) relationships between entities, many entities in real world network systems engage in more complex, multi-way (super-dyadic) relations. In such systems, modeling all relations as pairwise would lead to loss in information present in the original data. The representational power of pairwise graph models is insufficient to capture such higher order information and present it for analysis or learning tasks.

\textit{Hypergraphs} provide a  natural representation for such super-dyadic relations. A hypergraph is a generalization of a graph that allows for an `edge' to connect multiple nodes. Rather than vertices and edges, the hypergraph looks like a collection of overlapping subsets of vertices, referred to as hyperedges. A hyperedge can capture a multi-way relation; for example, in a co-citation network, a hyperedge could represent a group of co-cited papers. If this were modeled as a graph, we would be able to see which papers are citing other papers, but would not see if multiple papers are being cited by the same paper. If we could visually inspect the hypergraph, we could easily see the groups that form co-citation interactions. This suggests that the hypergraph representation is not only more information-rich, but is conducive to higher order learning tasks by virtue of its structure. Indeed, research in learning on hypergraphs has been gaining recent traction \cite{AAAI1817136,AAAI1817017,AAAI1817386,Feng2018WWWPartial}.

Analogous to the graph clustering task, \textit{Hypergraph clustering} seeks to find dense connected components within a hypergraph \cite{schaeffer2007graph}. This has been the subject of much research in various communities with applications to various problems such as VLSI placement \cite{Karypis98hmetis}, image segmentation \cite{kim2011higherimagesegmentation}, de-clustering for parallel databases \cite{liu2001hypergraphdeclustering} and modeling eco-biological systems \cite{estrada2005complex}, among others.  A few previous works on hypergraph clustering \cite{leordeanu12efficient,pelillo2013gametheory,agarwal05beyondpairwise,Shashua2006tensor,Liu2010RobustCA} focused on k-uniform hypergraphs. Within the machine learning community, the authors of
 \cite{zhou2007learningscholkopf}, were among the earliest to look at learning on hypergraphs in the general case. They sought to support Spectral Clustering methods on hypergraphs and defined a suitable hypergraph Laplacian. This effort, like many other existing methods for hypergraph learning, makes use of a reduction of the hypergraph to a graph \cite{Agarwal06higherorder} and has led to much follow up work  \cite{AnandLouis2015Laplacian}.

An alternative methodology for clustering on simple graphs (those with just dyadic relations) is \textit{modularity maximization} \cite{newman2006modularity}. This class of methods, in addition to providing a useful metric for measuring cluster quality in the \textit{modularity} function, also returns the number of clusters automatically and don't require the expensive eigenvector computations typically associated with Spectral Clustering. In practice, a greedy optimization algorithm known as the Louvain method \cite{Blondel08fastunfolding} is commonly used, as it is known to be fast and scalable. These are features that would be advantageous to the hypergraph clustering problem as well, since graph reductions result in a combinatorial expansion in the number of edges. However, extending the modularity function to hypergraphs is not straightforward, as  a node-degree preserving null model would be required, analogous to the graph setting. 

Encoding the hyperedge-centric information present within the hypergraph is key to the development of an appropriate modularity-based framework for clustering. One simple option would be to reduce a hypergraph to simple graph and then employ a standard modularity-based solution. However, such an approach would lose critical information encoded within the complex super-dyadic hyperedge structure.  Additionally, when viewing the clustering problem via a minimization function, i.e., minimizing the number of cut edges, there are multiple ways to cut a hyperedge. Based on where the hyperedge is cut, the proportion and assignments of nodes on different sides of the cut will change, influencing the clustering. One would want to explicitly incorporate such information during clustering.

One way of incorporating information based on properties of hyperedges or their vertices, is to introduce hyperedge weights based on a metric or function of the data. Building on this idea, we make the following contributions in this work: 

\begin{itemize}
    \item We define a null model on hypergraphs, and prove its equivalence to the configuration model for undirected graphs. We derive a node-degree preserving graph reduction to satisfy this null model. Subsequently, we define a modularity function using the above, that can be maximized using the Louvain method.
    \item We propose an iterative hyperedge reweighting procedure that leverages information from the hypergraph structure and the balance of hyperedge cuts.
    \item We empirically evaluate the resultant algorithm, titled \textit{Iteratively Reweighted Modularity Maximization} (IRMM), on a range of real-world and synthetic datasets and demonstrate both its efficacy and efficiency over competitive baselines.
\end{itemize}

\section{Background}

\subsection{Hypergraphs}

Let $V$ be a finite set of nodes and $E$ be a collection of subsets of $V$ that are collectively exhaustive. Then $G = (V,E,w)$ is a hypergraph, with vertex set $V$ and hyperedge set $E$. Each hyperedge can be associated with a positive weight $w(e)$. While a traditional graph edge has just two nodes, a hyperedge can have multiple nodes. For a vertex $v$, we can write its degree as $d(v) = \sum_{e \in E, v \in e}w(e)$. The degree of a hyperedge $e$ is the number of nodes it contains; we can write this as $\delta(e) = |e|$.

The hypergraph incidence matrix $H$ is given by $h(v,e)=1$ if vertex $v$ is in hyperedge $e$, and $0$ otherwise. $W$ is the hyperedge weight matrix, $D_v$ is the vertex degree matrix, and $D_e$ is the edge degree matrix; all of these are diagonal matrices.

\noindent {\bf Clique Reduction:} For any hypergraph, one can find its \textit{clique reduction} \cite{hadleyengenvector} by simply replacing each hyperedge  by a clique formed from its node set. The adjacency matrix for the clique reduction of a hypergraph with incidence matrix $H$ can be written as:

$$A = HWH^T$$

The hypergraph is thus reduced to a graph.  $D_v$ may be subtracted from this matrix to zero its diagonals and remove self-loops.

\subsection{Modularity}

When clustering graphs, it is desirable to cut as few edges within a cluster as possible. Modularity \cite{newman2006modularity} is a metric of clustering quality that measures whether the number of within-cluster edges is greater than its expected value. This is defined as follows:
\begin{equation} \label{basic-modularity}
    \begin{aligned}
    Q &= \frac{1}{2m}\sum_{ij}[A_{ij} - P_{ij}]\delta(g_i,g_j) \\
    &= \frac{1}{2m}\sum_{ij}B_{ij}\delta(g_i,g_j)
    \end{aligned}
\end{equation}

Here, $B_{ij} = A_{ij} - P_{ij}$ is called the modularity matrix. $P_{ij}$ denotes the expected number of edges between node $i$ and node $j$, given by a \textit{null model}. For graphs, the \textit{configuration model} \cite{newman2010networks} is used, where edges are drawn randomly while keeping the node-degree preserved. For two nodes $i$ and $j$, with (weighted) degrees $k_i$ and $k_j$ respectively, the expected number of edges between them is hence given by:

$$P_{ij} = \frac{k_i k_j}{\sum_j k_j}$$

Since the total number of edges in a given network is fixed, maximizing the number of within-cluster edges is the same as minimizing the number of between-cluster edges. This suggests that clustering can be achieved by \textit{modularity maximization}.

\section{Hypergraph Modularity}

Analogous to the configuration model for graphs, we propose a simple but novel node-degree preserving null model for hypergraphs. Specifically we have:
\begin{equation} \label{hypergraph-null-model}
P^{hyp}_{ij} = \frac{d(i) \times d(j)}{\sum_{v \in V}d(v)}
\end{equation}

We wish to use this null model with an adjacency matrix obtained by reducing the hypergraph to a graph. However, when taking the clique reduction, the degree of a node in the corresponding graph is not the same as its degree in the original hypergraph, as verified below. 

\begin{lemma}
For the clique reduction of a hypergraph with incidence matrix $H$, the degree of a node $i$ in the reduced graph is given by: 
$$k_i = \sum_{e \in E} H(i,e)w(e) (\delta(e) - 1)$$
where $\delta(e)$ and $w(e)$ are the degree and weight of a hyperedge $e$ respectively.
\end{lemma}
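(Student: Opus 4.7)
The plan is to compute the node degree directly from the clique-reduced adjacency matrix $A = HWH^T$, subtracting $D_v$ to zero the diagonal as indicated in the background. Since for any node $i$ the graph degree in the reduced graph is $k_i = \sum_{j \neq i} A_{ij}$, I would start by expanding the $(i,j)$ entry as $A_{ij} = \sum_{e \in E} H(i,e)\, w(e)\, H(j,e)$, which simply says that the edge weight between $i$ and $j$ in the reduction is the total weight of the hyperedges containing both of them.

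Next I would substitute this into the sum defining $k_i$ and swap the order of summation, so that the outer sum becomes a sum over hyperedges $e$ and the inner sum counts, for each hyperedge $e$ containing $i$, how many other nodes $j \neq i$ lie in $e$. By definition $\delta(e) = |e|$, so this inner count is exactly $\delta(e) - 1$ whenever $H(i,e) = 1$, and the contribution vanishes otherwise. Gathering the factor $H(i,e)\, w(e)$ in front, one obtains
\begin{equation*}
k_i \;=\; \sum_{e \in E} H(i,e)\, w(e)\, (\delta(e) - 1),
\end{equation*}
which is the claimed identity.

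I do not anticipate a real obstacle here, as the argument is essentially a double-counting of incidences; the only bookkeeping subtlety is making sure the diagonal term is handled correctly. Specifically, the diagonal entry $(HWH^T)_{ii} = \sum_e H(i,e)\, w(e)\, \delta(e)$ would contribute an extra $w(e)$ per hyperedge through $i$; subtracting $D_v$ (whose $i$-th diagonal entry is $d(i) = \sum_e H(i,e)\, w(e)$) removes precisely this self-loop contribution and yields the $(\delta(e) - 1)$ factor rather than $\delta(e)$. I would make this step explicit to justify why the formula matches even though we work with the self-loop-free adjacency matrix.
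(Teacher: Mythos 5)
Your proposal is correct and takes essentially the same route as the paper's own proof: expand $A_{ij} = \sum_{e \in E} H(i,e)\,w(e)\,H(j,e)$, zero out the diagonal, and swap the order of summation so the inner sum over $j \neq i$ counts $\delta(e)-1$ nodes per hyperedge containing $i$. One small slip in your final bookkeeping paragraph: the diagonal entry is $(HWH^T)_{ii} = \sum_{e} H(i,e)\,w(e) = d(i)$ (since $H(i,e)^2 = H(i,e)$), not $\sum_{e} H(i,e)\,w(e)\,\delta(e)$ --- the latter is the \emph{full row sum} $\sum_j (HWH^T)_{ij}$ including the diagonal; your conclusion still holds (row sum minus $d(i)$ gives the $(\delta(e)-1)$ factor), but the formula as written should be corrected.
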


\begin{proof}

The adjacency matrix of the reduced graph is given by
$$A_{clique} = HWH^{T}$$

$$(HWH^T)_{ij} = \sum_{e \in E} H(i,e)w(e)H(j,e)$$

Note that we do not have to consider self-loops, since they are not cut during the modularity maximization process. This is done by explicitly setting $A_{ii}=0$ for all $i$. Taking this into account, we can write the degree of a node $i$ in the reduced graph as: \\
\begin{align*}
    k_i &= \sum_j A_{ij} \\
    &= \sum_j \sum_{e \in E} H(i,e)w(e)H(j,e) \\
    &= \sum_{e \in E} H(i,e)w(e) \sum_{j:j \ne i} H(j,e) \\
    &= \sum_{e \in E} H(i,e)w(e) (\delta(e) - 1)
\end{align*}
\end{proof}

The result of the above theorem shows that in the clique reduction of a hypergraph, the node degree is over counted by a factor of $(\delta(e) - 1)$ for each hyperedge $e$. We can hence correct the node degree of the clique reduction by scaling each $w(e)$ down by ($\delta(e) - 1)$. This leads to the following corrected adjacency matrix,

\begin{equation} \label{hypergraph-reduction}
A^{hyp} = HW(D_e - I)^{-1}H^{T}
\end{equation}

We can now verify that the above adjacency matrix preserves the hypergraph node degree.

\begin{proposition}
For the reduction of a hypergraph given by the adjacency matrix $A = HW(D_e - I)^{-1}H^{T}$, the degree of a node $i$ in the reduced graph (denoted $k_i$) is equal to its hypergraph node degree $d(i)$.
\end{proposition}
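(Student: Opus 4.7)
The plan is to simply invoke the preceding lemma with a rescaled weight matrix and observe that the rescaling cancels exactly the over-counting factor. Concretely, the matrix $A^{hyp} = HW(D_e - I)^{-1}H^T$ has the same functional form as the clique-reduction adjacency matrix $HWH^T$, except that the hyperedge weight matrix $W$ has been replaced by $W(D_e - I)^{-1}$. Since $D_e$ is diagonal with entries $\delta(e)$, this amounts to replacing each $w(e)$ by $w(e)/(\delta(e)-1)$, so the lemma applies verbatim with this modified weight.

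First I would write out $k_i = \sum_j A^{hyp}_{ij}$ and expand it as in the lemma, obtaining
\[
k_i \;=\; \sum_{e \in E} H(i,e)\,\frac{w(e)}{\delta(e)-1}\,(\delta(e)-1).
\]
Then I would cancel the $(\delta(e)-1)$ factors to get $k_i = \sum_{e \in E} H(i,e)\,w(e)$. Finally, recognizing that $H(i,e)=1$ exactly when $v_i \in e$, this sum is precisely $\sum_{e \in E,\, i \in e} w(e) = d(i)$, which is the hypergraph degree defined in the Background section.

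There is essentially no obstacle here: the proposition is a one-line corollary of the lemma, and the only subtlety is making sure that the diagonal-removal convention (setting $A_{ii}=0$) used in the lemma is also applied to $A^{hyp}$, so that the inner sum $\sum_{j\neq i} H(j,e) = \delta(e)-1$ rather than $\delta(e)$. As long as that convention is carried over consistently, the cancellation is immediate and the claim follows.
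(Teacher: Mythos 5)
Your proposal is correct and matches the paper's own proof in essence: the paper likewise expands $k_i = \sum_j A^{hyp}_{ij} = \sum_{e \in E} \frac{H(i,e)w(e)}{\delta(e)-1}\sum_{j \ne i}H(j,e)$, uses the zeroed-diagonal convention to get $\sum_{j \ne i}H(j,e) = \delta(e)-1$, and cancels to obtain $d(i)$. Framing it as the lemma applied to the rescaled weights $w(e)/(\delta(e)-1)$ is just a slightly more economical packaging of the identical computation, and your attention to carrying over the $A_{ii}=0$ convention is exactly the one subtlety the paper also flags.
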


\begin{proof}

We have,

$$(HW(D_e - I)^{-1}H^{T})_{ij} = \sum_{e \in E} \frac{H(i,e)w(e)H(j,e)}{\delta(e) - 1}$$

Note again that we do not have to consider self-loops, since they are not cut during the modularity maximization process (explicitly setting $A_{ii}=0$ for all $i$). We can rewrite the degree of a node $i$ in the reduced graph as \\

\begin{align*}
    k_i &= \sum_j A_{ij} \\
    &= \sum_{e \in E} \frac{H(i,e)w(e)}{\delta(e) - 1} \sum_{j:j \ne i} H(j,e) \\
    &= \sum_{e \in E} H(i,e)w(e) \\
    &= d(i)
\end{align*}
\end{proof}

We can use this node-degree preserving reduction, with the diagonals zeroed out, to correctly implement the null model from Eq. \ref{hypergraph-null-model}. The hypergraph modularity matrix can subsequently be written as,

$$B^{hyp}_{ij} = A^{hyp}_{ij} - P^{hyp}_{ij}$$

This new modularity matrix can be used in Eq. \ref{basic-modularity} to obtain an expression for the hypergraph modularity and can then be fed to a Louvain-style algorithm.

\begin{equation} \label{hypergraph-modularity}
    Q^{hyp} = \frac{1}{2m}\sum_{ij}B^{hyp}_{ij}\delta(g_i,g_j)  
\end{equation}

\subsection{Analysis and Connections to Random Walks}

Consider the clique reduction of the hypergraph. We can distribute the weight of each hyperedge uniformly among the edges in its associated clique. All nodes within a single hyperedge are assumed to contribute equally; a given node would receive a fraction of the weight of each hyperedge it belongs to. The number of edges each node is connected to from a hyperedge $e$ is $\delta(e) - 1$.  Hence by dividing each hyperedge weight by the number of edges in the clique, we obtain the normalized weight matrix $W(D_e-I)^{-1}$. Introducing this in the weighted clique formulation results in the proposed reduction $A = HW(D_e-I)^{-1}H^T$.

Another way of interpreting this reduction is to consider a random walk on the hypergraph in the following manner - 
\begin{itemize}
    \item take a start node $i$
    \item select a hyperedge $e$ containing $i$, proportional to its weight $w(e)$
    \item select a \textit{new} node from $e$ uniformly (there are $\delta(e) - 1$ choices)
\end{itemize}
The behaviour described above is captured by the following random walk transition model - 
\begin{align*}
    P_{ij} &= \sum_{e \in E} \frac{w(e)h(i,e)}{d(i)} \frac{h(j,e)}{\delta(e)-1}\\
    \implies P &= D_v^{-1}HW(D_e-I)^{-1}H^T
\end{align*}

By comparing the above with the random walk probability matrix for graphs ($P = D^{-1}A$) we can recover the reduction $A = HW(D_e-I)^{-1}H^T$.

\subsection{Hypergraph Louvain Method} \label{hypergraph-louvain-description}

On implementation of the modularity function defined in Eq. \ref{hypergraph-modularity}, we use the Louvain method to find clusters by maximizing the hypergraph modularity. By default, the algorithm automatically returns the number of clusters. To return a fixed number of clusters $k$, we use hierarchical agglomerative clustering as a post-processing step. For the linkage criterion, we use the average linkage.

\section{Iterative Hyperedge Reweighting}

When defining hypergraph modularity, we proposed a null model that would preserve node information. We now look at ways to improve on this initial result by leveraging hyperedge information for improving clustering in an iterative fashion.

When clustering graphs, it is desired that edges within clusters are greater in number than edges between clusters. Hence when trying to improve clustering, we look at minimizing the number of between-cluster edges that get cut. For a hypergraph, this would be done by minimizing the total volume of the hyperedge cut. In \cite{zhou2007learningscholkopf}, it was observed that for a given hyperedge $e$, the volume of the cut $\partial S$ is proportional to $|e \cap S || e \cap S^c |$, for a hypergraph whose vertex set is partitioned into two sets  $S$ and $S^c$.

The product $|e \cap S || e \cap S^c |$ can be interpreted as the number of cut sub-edges within a clique reduction. We can see that this product is maximized when the cut is balanced and there are an equal number of vertices in $S$ and $S^c$. When all vertices of $e$ go into one partition and the other partition is left empty, the product is zero. A min-cut algorithm would favour cuts that are as unbalanced as possible, as a consequence of the minimization of $|e \cap S || e \cap S^c |$.

Intuitively, if there were a larger portion of vertices in one cluster and a smaller portion in the other, it is more likely that the smaller group of vertices are actually similar to the rest and should be pulled into the larger cluster. Thus when minimizing the cut size, a hyperedge that gets cut equally between clusters is less informative than a hyperedge that gets an unbalanced cut with more vertices in one cluster. We would want hyperedges that get cut to be more balanced, and more informative hyperedges, that would have gotten unbalanced cuts, to be left uncut and pulled into the cluster. 

This can be done by increasing the weights of hyperedges that get unbalanced cuts, and decreasing the weights of hyperedges that get more balanced cuts. We know that an algorithm that tries to minimize the volume of the hyperedge boundary would try to cut as few heavily weighted hyperedges as possible. Since the hyperedges that had more unbalanced cuts get a higher weight, they are less likely to be cut after re-weighting, and instead would reside inside a cluster. Hyperedges that had more balanced cuts get a lower weight, and on re-weighting, continue to get balanced cuts. Thus after re-weighting and clustering, we would observe fewer hyperedges between clusters, and more hyperedges pushed into clusters. 

\begin{figure}[h]
\centering
\begin{minipage}{0.25\textwidth}
\centering
\includegraphics[width=0.99\linewidth]{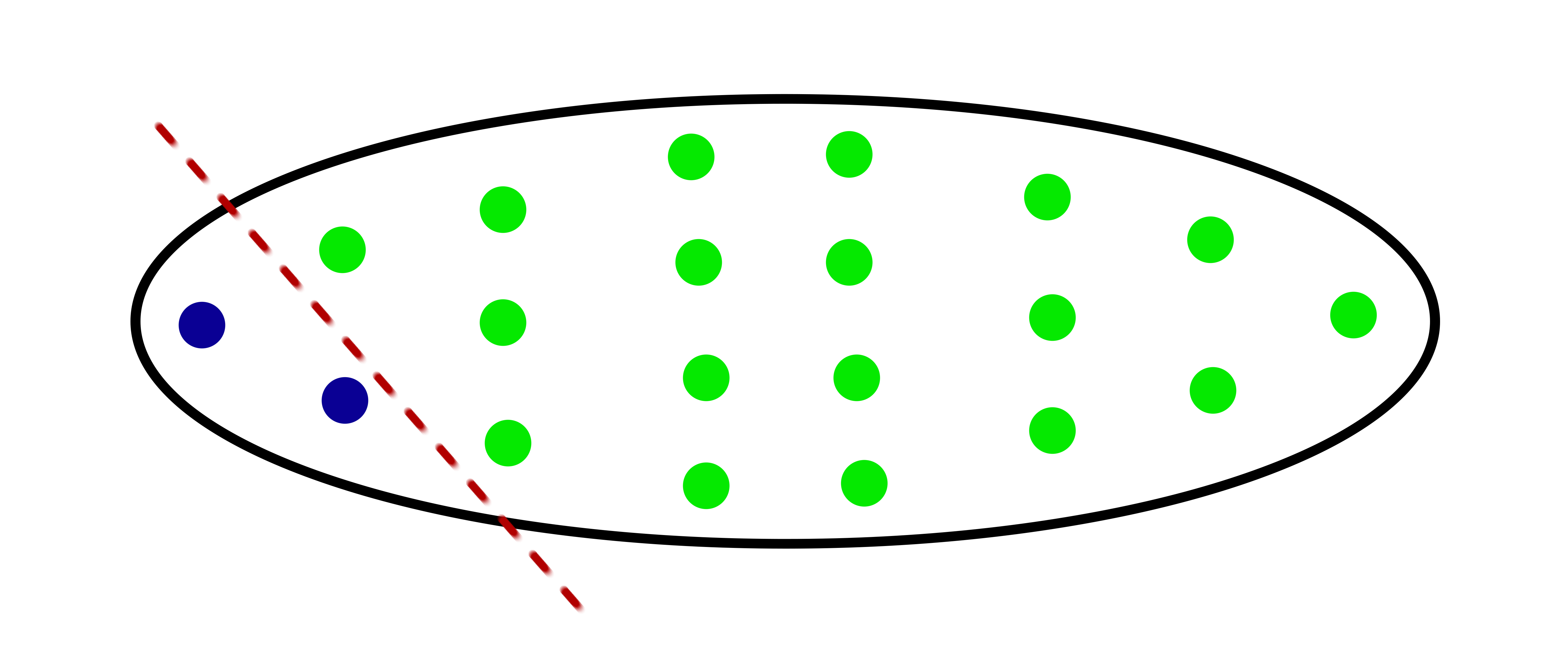}
\label{fig-cuts-reweight-before}
\begin{equation*}
    t = \bigg(\frac{1}{2} + \frac{1}{18}\bigg) \times 20 = 11.111
\end{equation*}
\end{minipage}%
\begin{minipage}{0.25\textwidth}
\centering
\includegraphics[width=0.99\linewidth]{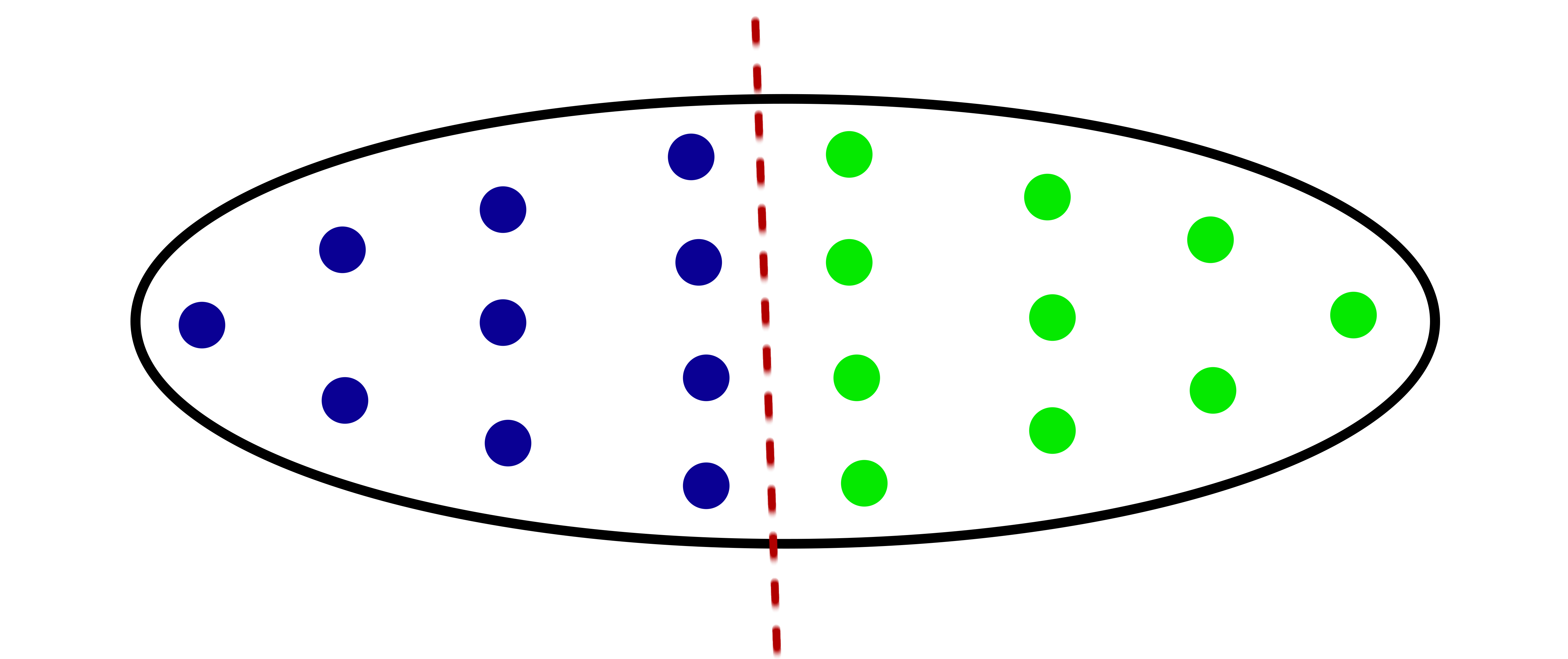}
\label{fig-cuts-reweight-after}
\begin{equation*}
    t = \bigg(\frac{1}{10} + \frac{1}{10}\bigg) \times 20 = 4
\end{equation*}
\end{minipage}
\label{fig-cuts-reweight}
\caption{Reweighting for different hyperedge cuts}
\end{figure}

Now, we formally develop a re-weighting scheme that satisfies the properties described above - increasing weight for a hyperedge that received a more unbalanced cut, and decreasing weight for a hyperedge that received a more balanced cut. Considering the case where a hyperedge is partitioned into two, for a cut hyperedge with $k_1$ and $k_2$ nodes in each partition ($k_1, k_2 \ne 0$), we have the following equation that operationalizes the aforementioned scheme - 

\begin{equation} \label{2-clusters-reweight}
t = \Big( \frac{1}{k_1} + \frac{1}{k_2} \Big) \times \delta(e)
\end{equation}

Here the multiplicative coefficient, $\delta(e)$, seeks to keep $t$ independent of the number of vertices in the hyperedges. Note that for two partitions, $\delta(e) = k_1+k_2$.

To see why this satisfies our desired property, note that $t$ is minimized when $k_1$ and $k_2$ are equal, that is, each partition gets $\delta(e)/2$ nodes and $t = 4$. For a partition with a 75:25 ratio, we would have $t = \frac{16}{3}$, and for a 95:5 ratio we would have $t = \frac{400}{19}$. It can be easily verified that $t$ is indeed minimized when $k_1=k_2=\delta(e)/2$.

We can then generalize Eq. \ref{2-clusters-reweight} to $c$ partitions as follows - 

\begin{equation} \label{c-clusters-reweight}
w^{\prime}(e) = \frac{1}{m}\sum_{i=1}^{c}\frac{1}{k_i + 1}[\delta(e) + c]
\end{equation}

Here, both the $+1$ and $+c$ terms are added for smoothing, to account for cases where any of the $k_i$'s are zero. Additionally, $m$ is the number of hyperedges, and the division by $m$ is added to normalize the weights.

Let $w_t(e)$ be the weight of hyperedge $e$ in the $t^{th}$ iteration, and $w^{\prime}(e)$ be the weight computed at a given iteration (using Eq. \ref{c-clusters-reweight}). We obtain the weight update rule by taking a moving average as follows.

\begin{equation} \label{reweight-moving-avg}
w_{t+1}(e) = \alpha w_t(e) + (1-\alpha) w^{\prime}(e)
\end{equation}

The complete algorithm for modularity maximization on hypergraphs with iterative reweighting, entitled \textit{Iteratively Reweighted Modularity Maximization (IRMM)}, is described in Algorithm 1. We are now in a position to evaluate our ideas empirically.

\begin{table*}[h]
\centering
\begin{tabular}{l c c c c c}
\hline
Dataset & No. of nodes & No. of hyperedges & Average hyperedge degree & Average node degree & No. of classes \\ \hline
TwitterFootball & 234 & 3587 & 15.491 & 237.474 & 20 \\ 
Cora & 2708 & 2222 & 3.443 & 2.825 & 7 \\
Citeseer & 3264 & 3702 & 27.988 & 31.745 & 6 \\
MovieLens & 3893 & 4677 & 79.875 & 95.961 & 2 \\
Arnetminer & 21375 & 38446 & 4.686 & 8.429 & 10 \\ \hline
\end{tabular}
\caption{Dataset Description}
\label{data-table}
\end{table*}

\begin{algorithm}[h] \label{iterative-louvain}
\SetEndCharOfAlgoLine{}
    \SetKwInOut{Input}{input}
    \SetKwInOut{Output}{output}
    \Input{Hypergraph incidence matrix $H$, vertex degree matrix $D_v$, hyperedge degree matrix $D_e$, hyperedge weights $W$ }
    \Output{Cluster assignments \textit{cluster\_ids}, number of clusters $c$}
    // Initialize weights as $W \leftarrow I$ if the hypergraph is unweighted\\
    \Repeat{$\lVert W - W_{prev} \rVert < threshold$}{
        // Compute reduced adjacency matrix \\
        $A \leftarrow HW(D_e - I)^{-1}H^T$\\
        // Zero out the diagonals of A \\
        $A \leftarrow zero\_diag(A)$ \\ 
        // Return number of clusters and cluster assignments \\
        cluster\_ids, c = LOUVAIN\_MOD\_MAX(A) \\
        // Compute new weight for each hyperedge \\
        \For{$e \in E$}{
        // Compute the number of nodes in each cluster \\
            \For {$i \in [1,..,c]$}{
                // Set of nodes in cluster i \\
                $C_i \leftarrow cluster\_assignments[i]$ \\
                $k_i = |e \cap C_i|$ \\
            }
        // Compute new weight \\
        $w^{\prime}(e) = \frac{1}{m}\sum_{i=1}^{c} \frac{1}{k_i + 1} ( \delta(e) + c)$ \\
        // Take moving average with previous weight \\
        $W_{prev}(e) \leftarrow W(e)$ \\
        $W(e) = \frac{1}{2}(w^{\prime}(e) + W_{prev}(e))$ \\
        }    
    }
    \caption{Iteratively Reweighted Modularity Maximization (IRMM)}
\end{algorithm}

\subsection{A simple example}

Figure \ref{fig-cuts-louvaintoy} illustrates the change in hyperedge cuts on a toy hypergraph for a single iteration.

\begin{figure}[h]
\centering
\begin{minipage}{0.25\textwidth}
\centering
\includegraphics[width=0.99\linewidth]{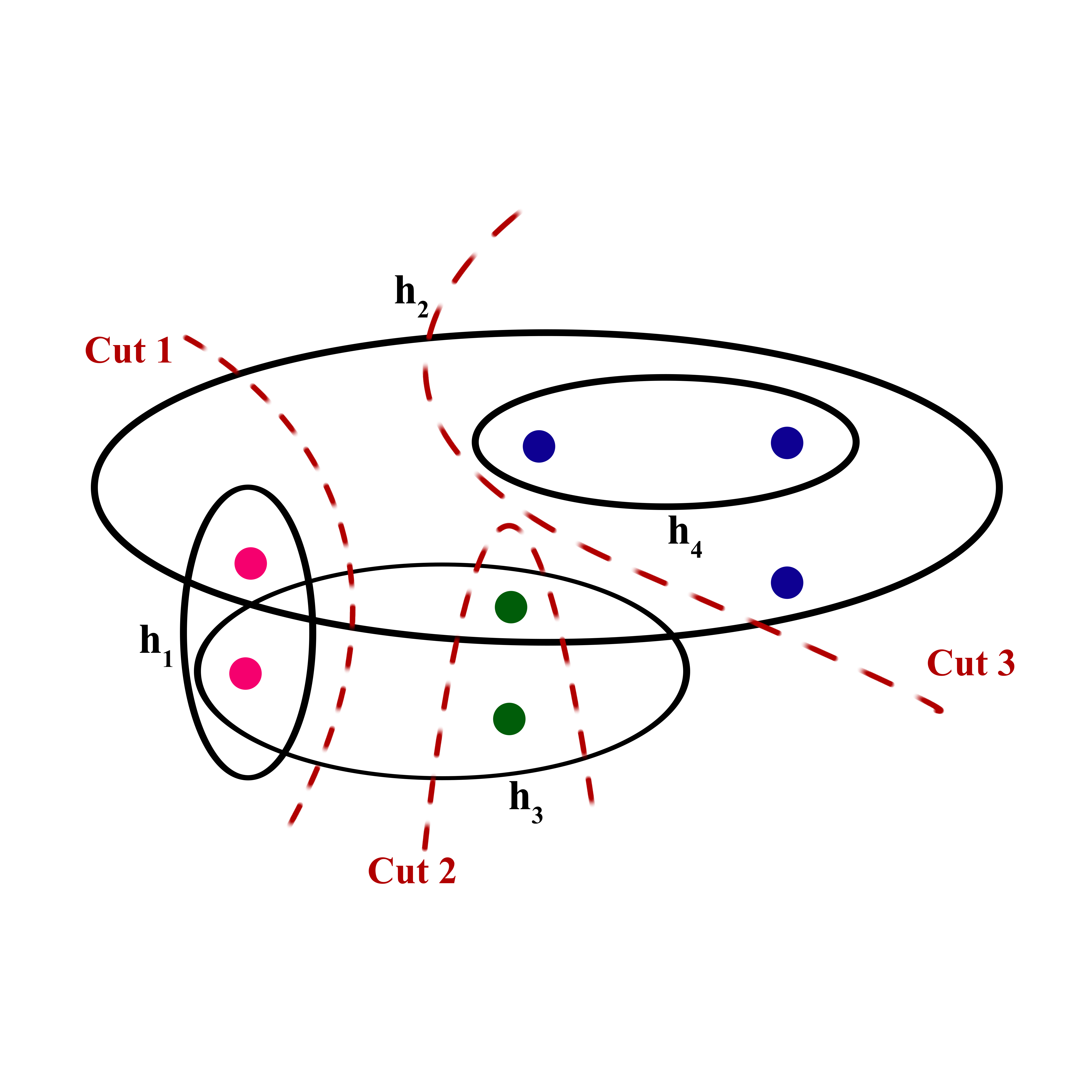}
\label{fig-cuts-louvaintoy-before}
\subcaption{Before Reweighting}
\end{minipage}%
\begin{minipage}{0.25\textwidth}
\centering
\includegraphics[width=0.99\linewidth]{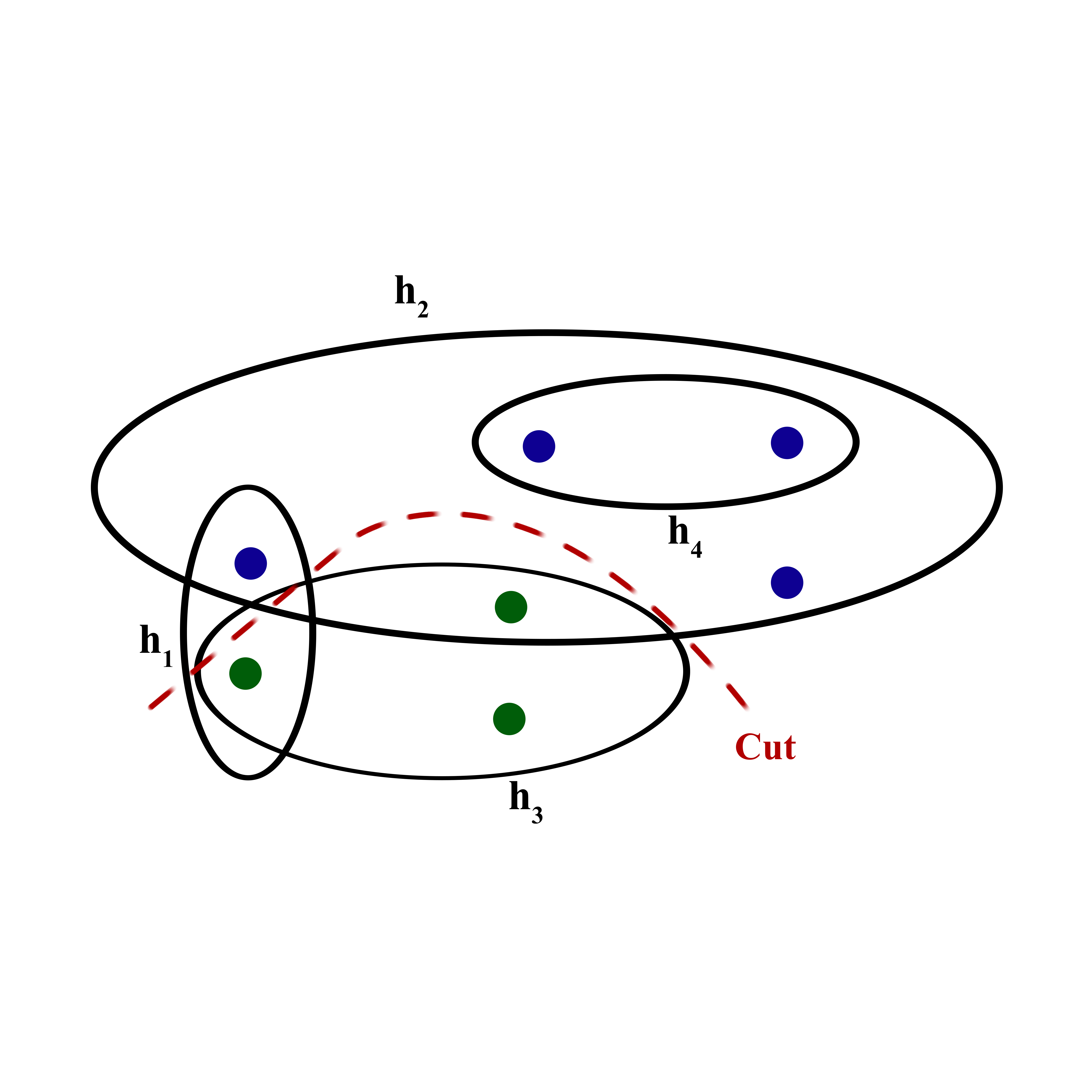}
\label{fig-cuts-louvaintoy-after}
\subcaption{After Reweighting}
\end{minipage}
\caption{Effect of iterative reweighting}
\label{fig-cuts-louvaintoy}
\end{figure}

Initially when clustering this hypergraph by modularity maximization, the hypergraph had two highly unbalanced cuts. Cut 1 and Cut 2 each split hyperedge $h_2$ in a $1:4$ ratio, and also each split hyperedge $h_3$ in a $1:2$ ratio respectively. Cut 3 splits hyperedge $h_1$ in a $2:3$ ratio. Once the reweighting is done, the $1:4$ splits are removed. This ends up reducing the number of cuts from 3 to just 1, leaving two neat clusters. On visual inspection, this cluster assignment makes sense; the single nodes in $h_1$ and $h_3$, initially assigned to another cluster, have been pulled back into their respective (larger) clusters. This captures the intended behaviour for the re-weighting scheme as described earlier.
\section{Evaluation on Ground Truth}

We used the symmetric F1 measure \cite{yang2012leskovecF1} to evaluate the clustering performance of our proposed methods on real-world data with ground-truth labels.

The proposed methods are shown in the results table as \textit{Hypergraph-Louvain} and \textit{IRMM}.

\subsection{Settings for IRMM}

We tried tuning the hyperparameter $\alpha$ using a grid search over the set of values ${0.1,0.2,...,0.9}$. We did not find a difference in the resultant F1 scores, and minimal difference in the rate of convergence, over a wide range of values (for example, $0.3$ to $0.9$ on the TwitterFootball dataset). Being a scalar coefficient in a moving average, it did not result in a large difference in resultant weight values when set in this useful range. Hence, for the final experiments, we chose to leave it at $\alpha = 0.5$. For the iterations, we set the stopping threshold at $0.01$.

\subsection{Compared Methods} 
To evaluate the performance of our proposed methods, we compared against the following baselines.

\textbf{Clique Reductions:} We took the clique reduction of the hypergraph ($A = HWH^T$) and ran the graph versions of Spectral Clustering and the Louvain method. These are referred to in the table as \textit{Clique-Spectral} and \textit{Clique-Louvain} respectively.

\textbf{Hypergraph Spectral Clustering:} We use the hypergraph Laplacian as defined in \cite{zhou2007learningscholkopf}. The top $k$ eigenvectors of the Laplacian are found, and then clustered using bisecting k-means. In the results table, this method is referred to as \textit{Hypergraph-Spectral}.

\textbf{hMETIS\footnote{http://glaros.dtc.umn.edu/gkhome/metis/hmetis/download} and PaToH\footnote{http://bmi.osu.edu/umit/software.html}:} These are hypergraph partitioning algorithms that are commonly used in practice. We used the original implementations provided by the respective authors.

\subsection{Datasets}

For all datasets, we took the single largest connected component of the hypergraph. The class labels were taken as ground truth clusters. Table 1 gives the dataset statistics.

\textbf{MovieLens \footnote{http://ir.ii.uam.es/hetrec2011/datasets.html}:} This is a multi-relational dataset provided by GroupLens research, where movies are represented by nodes. We used the \textit{director} relation to define hyperedges and build a co-director hypergraph. A group of nodes are connected by a hyperedge if they were directed by the same individual.

\textbf{Cora and Citeseer}: These are bibliographic datasets, where the nodes represent papers. In each, the nodes are connected by a hyperedge if they share the same set of words (after removing low frequency  and stop words). \cite{sen2008collective}.

\textbf{TwitterFootball:} This is taken from one view of the Twitter dataset \cite{Greene2013AggregatingContent}, and it represents members of 20 different football clubs of English Premier League. Here, the nodes are the players, and hyperedges are formed based on whether they are co-listed.

\begin{table*}
\centering
\begin{tabular}{l c c c c c c}
\hline
 & {\color[HTML]{000000} Citeseer} &
 {\color[HTML]{000000} Cora} &
 {\color[HTML]{000000} MovieLens} &
 {\color[HTML]{000000} TwitterFootball} &
 {\color[HTML]{000000} Arnetminer}\\\hline
 hMETIS & 0.2278 & 0.3787 & 0.4052 & 0.1648 & 0.0264 \\
  PaToH & 0.2095 & 0.1890 & 0.4008 & 0.0797 & 0.0023 \\ 
Clique-Spectral & 0.3408 & 0.2822 & 0.2052 & 0.1974 & 0.0220 \\
Hypergraph-Spectral & 0.3337 & 0.3051 & 0.2080 & 0.3596 & 0.0292 \\
Clique-Louvain & 0.3521 & 0.3027 & 0.3254 & 0.1412 & 0.0097 \\
Hypergraph-Louvain & 0.3673 & 0.4637 & 0.3498 & 0.3812 & 0.0686  \\
IRMM & \textbf{0.4887} & \textbf{0.5342} & \textbf{0.4134} & \textbf{0.5463} & \textbf{0.0754} \\ \hline
\end{tabular}
\caption{Symmetric F1 scores against ground truth, with the number of clusters returned by the Louvain method}
\label{F1table-louvain_k}
\end{table*}

\begin{table*}
\centering
\begin{tabular}{l c c c c c c}
\hline
 & {\color[HTML]{000000} Citeseer} &
 {\color[HTML]{000000} Cora} &
 {\color[HTML]{000000} MovieLens} &
 {\color[HTML]{000000} TwitterFootball} &
 {\color[HTML]{000000} Arnetminer}\\\hline
 hMETIS & 0.1941 & 0.3876 & 0.5116 & 0.3023 & 0.2729 \\
  PaToH & 0.1734 & 0.1588 & 0.5015 & 0.0595 & 0.1551 \\
Clique-Spectral & 0.3229 & 0.2735 & 0.508 & 0.4941 & 0.2134 \\
Hypergraph-Spectral & 0.3721 & 0.2860 & 0.5217 & 0.5059 & 0.3555 \\
Clique-Louvain & 0.3521 & 0.3027 & 0.5253 & 0.1412 & 0.4139 \\ 
Hypergraph-Louvain & 0.3010 & 0.3628 & 0.6685 & 0.3812 & 0.5140  \\
IRMM & \textbf{0.4270} & \textbf{0.3885} & \textbf{0.6693} & \textbf{0.5463} & \textbf{0.5154} \\ \hline
\end{tabular}
\caption{Symmetric F1 scores against ground truth, with the number of clusters set to the number of ground truth classes}
\label{F1table_groundtruth_k}
\end{table*}

\begin{table*}
\centering
\begin{tabular}{l c c c c c c}
\hline
 & Citeseer & Cora & MovieLens & TwitterFootball & Arnetminer \\ \hline
Hyper-Spectral & 84.16 & 41.44 & 155.8 & 3.88 & 34790 \\ 
Hyper-Louvain & 41.21 & 24.23 & 35.9 & 3.32 & 4311.2 \\ \hline
\end{tabular}
\caption{CPU times (in seconds) for the hypergraph clustering methods on all datasets}
\label{dataset-runtime-table}
\end{table*}

\textbf{Arnetminer:} This is also a co-citation network, but with a significantly larger number of nodes. Here, the nodes are papers, and they are connected by hyperedges if they are co-cited. We used the nodes from the Computer Science discipline, and its 10 sub-disciplines were treated as clusters.

\subsection{Experiments}

We compare the F1 scores for the different datasets on all the given methods. The number of clusters was first set to that returned by the Louvain method, in an unsupervised fashion. This is what would be expected in a real-world setting, where the number of clusters is not given apriori. Table 2 shows the results of this experiment. 

Secondly, we ran the same experiments with the number of clusters set to the number of ground truth classes, using the postprocessing step described in Section \ref{hypergraph-louvain-description}. The results of this experiment are given in Table 3.

We also plotted the results for varying number of clusters using the same methodology described above, to assess our method's robustness. The results are shown in Fig. 3.

\subsection{Results}

Firstly, we note that in both experiment settings, \textit{IRMM} shows the best performance on all but one of the datasets. In particular, it showed an improvement over \textit{Hypergraph-Louvain} on all datasets.

Additionally, it is evident that \textit{Hypergraph-Spectral} and \textit{Hypergraph-Louvain} consistently outperform the respective clique reduction based methods (\textit{Clique-Spectral} and \textit{Clique-Louvain}) on all datasets and both experiment settings. We infer that super-dyadic relational information captured by the hypergraph has a positive impact on the clustering performance.

Across the board, the hypergraph modularity maximization methods (\textit{Hypergraph-Louvain} and \textit{IRMM}) show competitive performance when compared to the baselines. On the MovieLens and ArnetMiner datasets, the clustering performance improves with the number of clusters set to the number of ground truth classes. We investigate this further in Section 6.

\begin{figure*}[t!p] \label{kplot}
\centering
         \begin{subfigure}[b]{0.48\textwidth}
                 \centering
                 \includegraphics[width=0.98\textwidth]{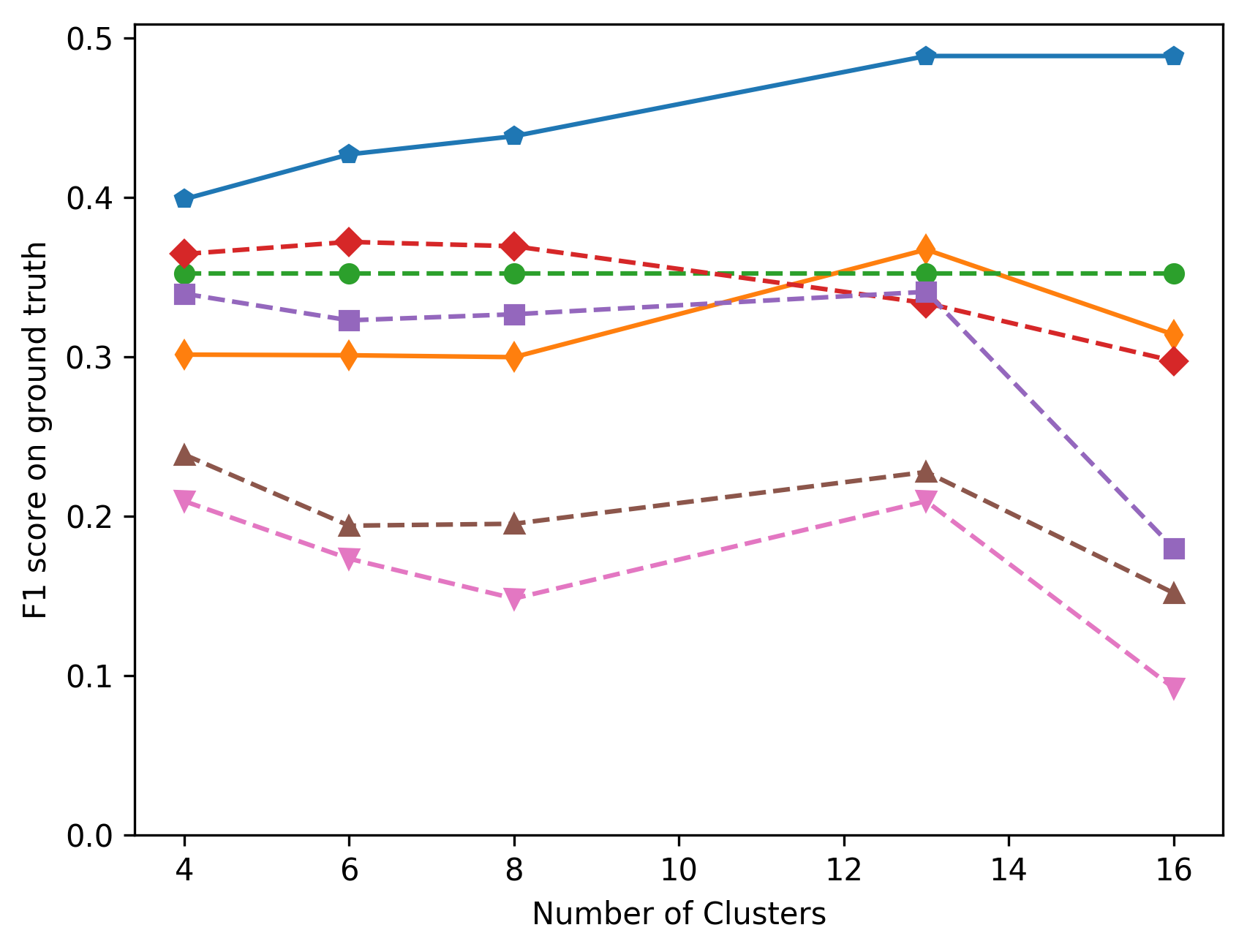}
                 \caption{Citeseer}
                 \label{citeseer:kplot}
         \end{subfigure}
         \begin{subfigure}[b]{0.48\textwidth}
                 \centering
                 \includegraphics[width=0.98\textwidth]{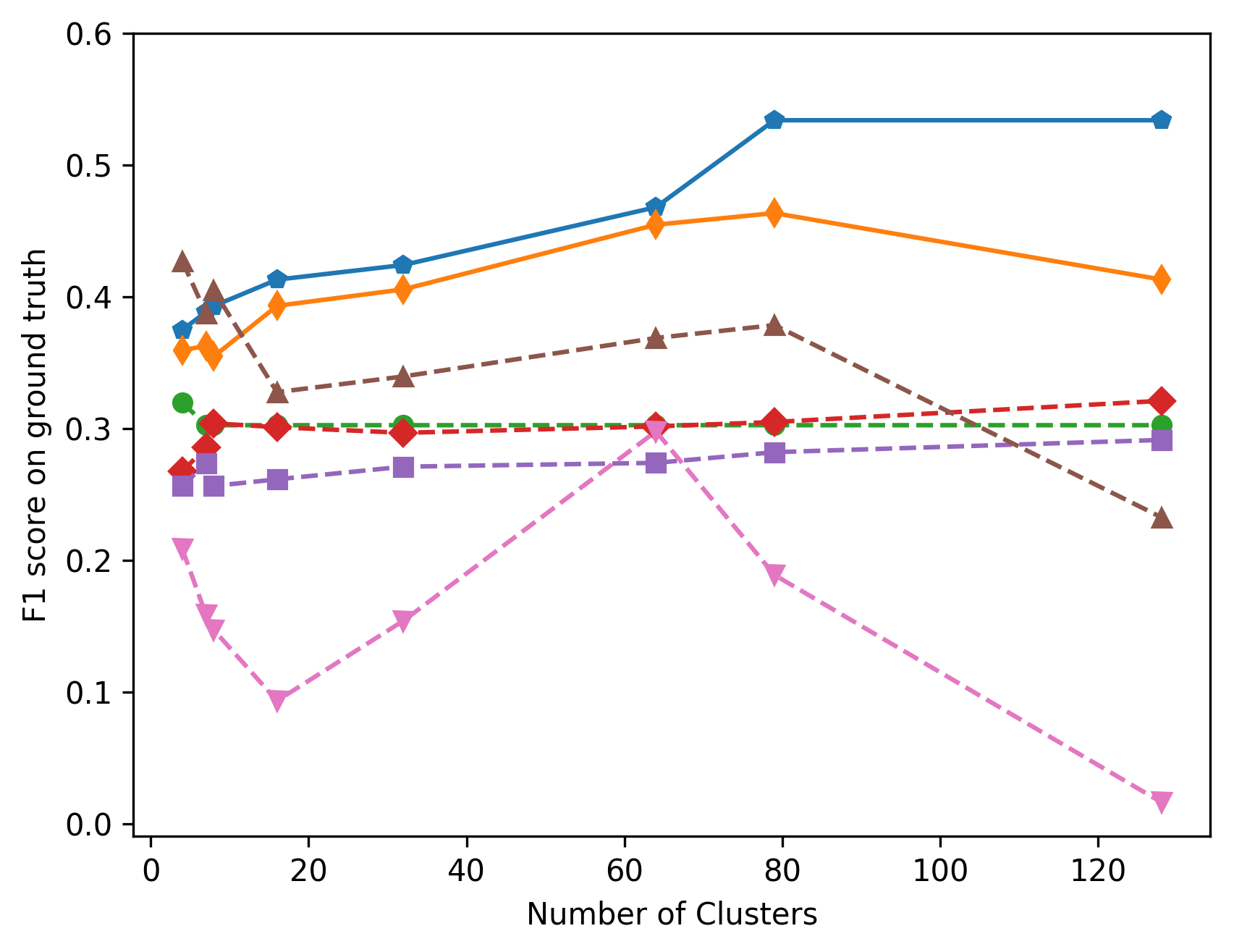}
                 \caption{Cora}
                 \label{cora:kplot}
         \end{subfigure}

         \begin{subfigure}[b]{0.48\textwidth}
                 \centering                \includegraphics[width=0.98\textwidth]{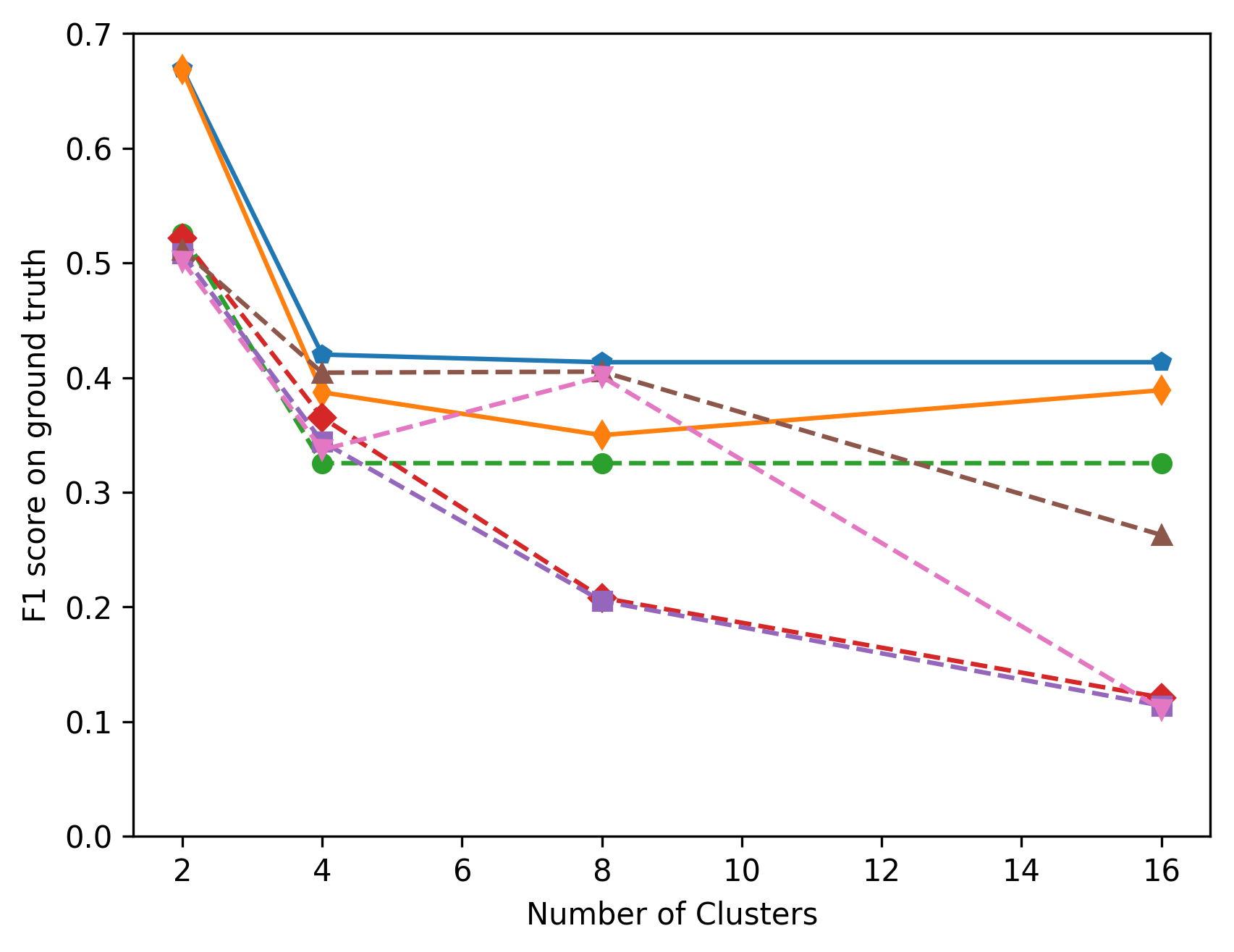}
                 \caption{MovieLens}
                 \label{movielens:kplot}
         \end{subfigure}
         \begin{subfigure}[b]{0.48\textwidth}
                 \centering                 \includegraphics[width=0.98\textwidth]{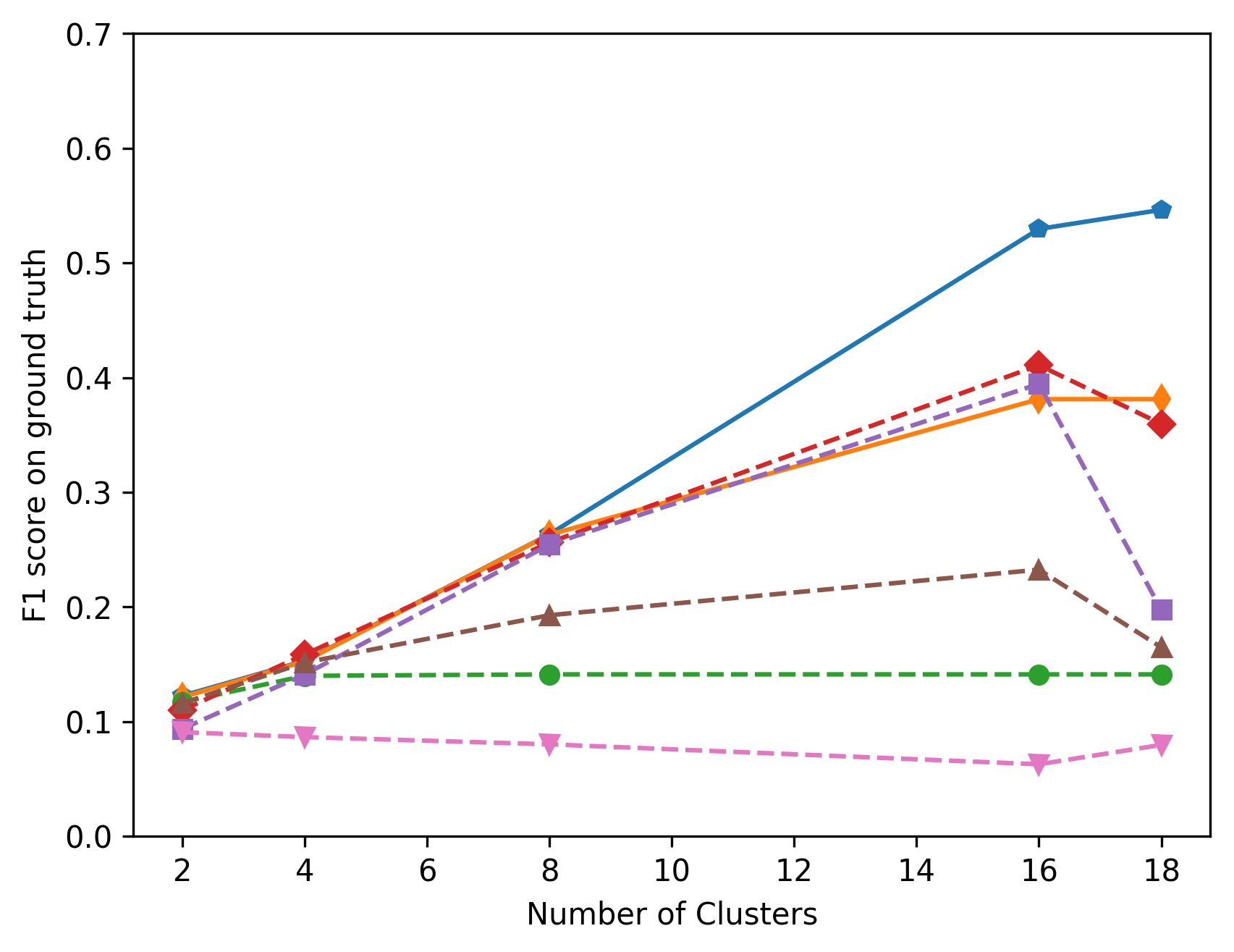}
                 \caption{TwitterFootball}
                 \label{football:kplot}
         \end{subfigure}

         \begin{subfigure}[b]{0.48\textwidth}
                 \centering                \includegraphics[width=0.98\textwidth]{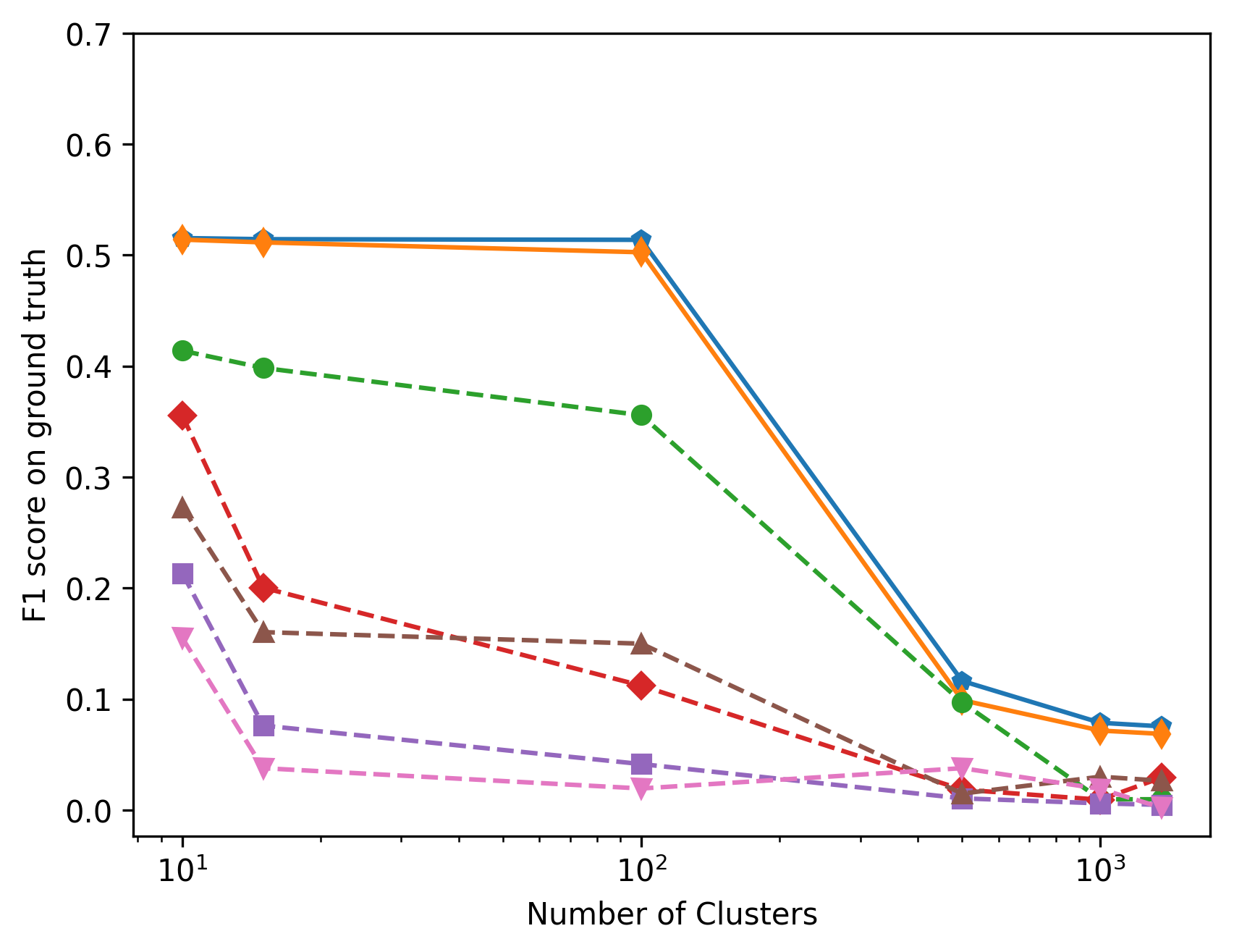}
                 \caption{Arnetminer}
                 \label{arnet:kplot}
         \end{subfigure}
         \begin{subfigure}[b]{0.48\textwidth}
                 \centering                 \includegraphics[width=0.7\textwidth]{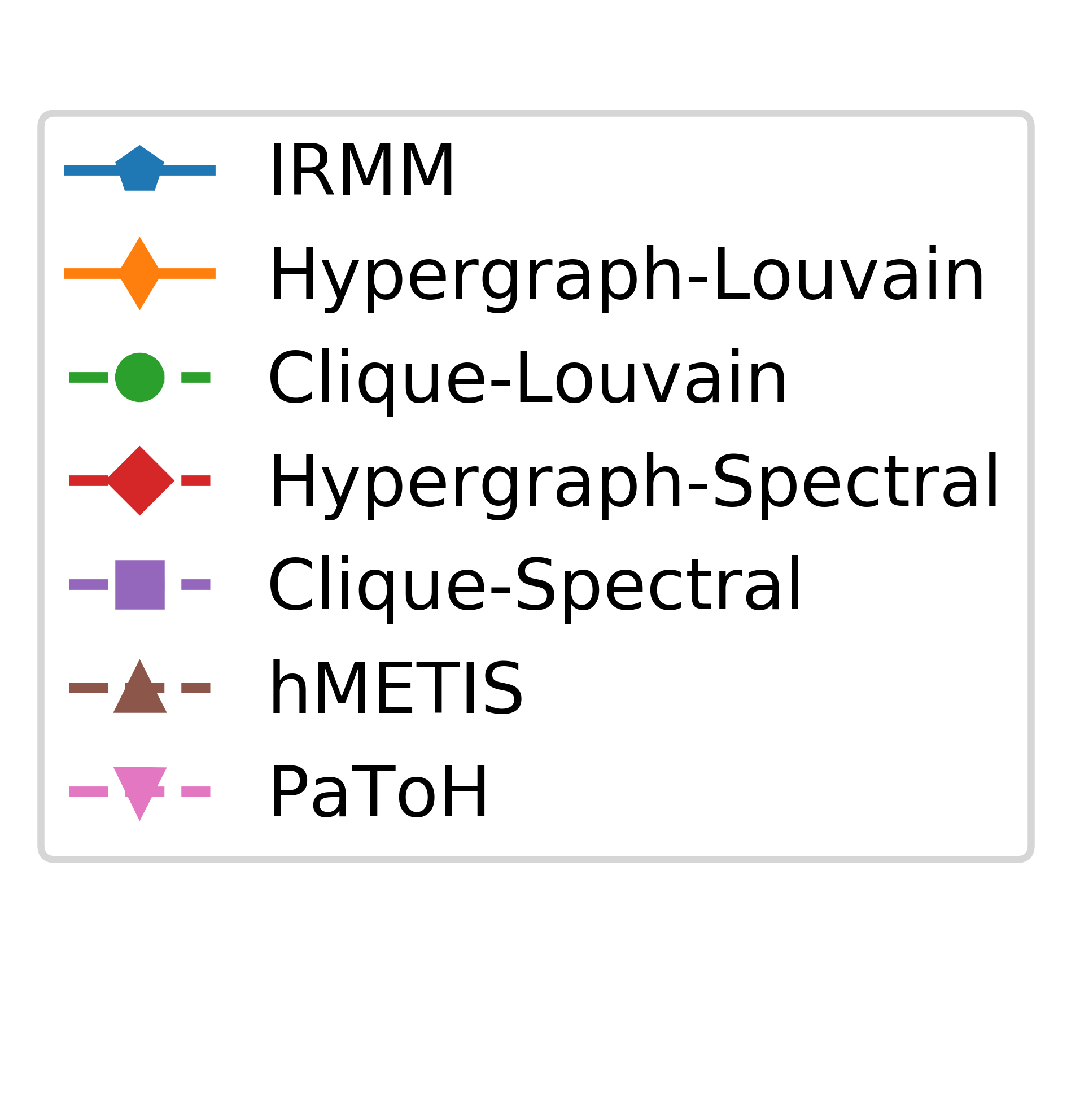}
                 \label{legend:kplot}
                 \caption*{}
         \end{subfigure}
\caption{F1 scores for varying number of clusters}
\end{figure*}

As shown in Fig. 3, the IRMM method outperforms other baseline methods on a majority of the experiment settings. This indicates the improvement in hyperedge cut due to the reweighting method. We note that on some datasets, the best performance is achieved when the number of clusters returned by the Louvain method is used (e.g Citeseer, Cora), and on others when the ground truth number of classes is used (e.g ArnetMiner, MovieLens). This could be based on the structure of clusters in the network and its relationship to the ground truth classes. A class could comprise multiple smaller clusters, which would be better detected by the Louvain algorithm. In other cases, the clustering structure could correspond better to the class label assignments. For example, the ArnetMiner network, where the class labels correspond to subdisciplines, shows much better performance when the number of clusters is brought down to the ground truth number of classes.
\section{Analysis}

In this section, we analyze the properties of the hypergraph modularity based methods. We look at the scalability of hypergraph modularity maximization. This is an important consideration for hypergraphs; for a given number of nodes $n$, there are $2^n$ possible hyperedges that can be formed, and even taking an adjacency matrix reduction of a hypergraph results in a combinatorial expansion in the number of edges. We also analyze the effect of reweighting, by looking at the proportions of each hyperedge in different clusters.

\subsection{Effect of reweighting on hyperedge cuts}

Consider a hyperedge that is cut, its nodes partitioned into different clusters. Looking at Eq. \ref{c-clusters-reweight}, we can see that $w^{\prime}(e)$ is minimized when all the partitions are of equal size, and maximized when one of the partitions is much larger than the other. The iterative reweighting procedure is designed to increase the number of hyperedges with balanced partitioning, and decrease the number of hyperedges with unbalanced partitioning. As iterations pass, hyperedges that are more unbalanced should be pushed into neighbouring clusters, and the hyperedges that lie between clusters should be more balanced.

The plots in Figure 5 illustrate the effect of hyperedge reweighting over iterations. We found the relative size of the largest partition of each hyperedge, and binned them in intervals of relative size = 0.1. The plot shows the fraction of hyperedges that fall in each bin over each iteration.

$$\text{relative size}(e) = \max_i \frac{\text{number of nodes in cluster }i}{\text{number of nodes in the hyperedge }e}$$

Here, we refer to the hyperedges as \textit{fragmented} if the relative size of its largest partition is low, and \textit{dominated} of the relative size of its largest partition is high. The fragmented edges are likely to be balanced, since the largest cluster size is low. 

On the Arnetminer dataset, we find that the number of dominated edges is already very high. Looking back at the evaluation on ground truth on this dataset, we find that the Hypergraph-Louvain method already well outperforms the baselines, and the magnitude of improvement added by the iterative method is relatively low compared to the other datasets. This is reflected in the corresponding plot in Fig. 5. The MovieLens dataset also exhibits a large number of dominated edges at the start of the algorithm.

On the smaller TwitterFootball dataset, which has a greater number of ground truth classes, we see that the number of dominated edges decreases and the number of fragmented edges increases. This is as expected; the increase in fragmented edges is likely to correspond to more balanced cuts. A similar trend is reflected in the larger Cora dataset.

The Citeseer dataset showed the largest improvement in F1 score, as shown in Tables 2 and 3. The increase in dominated edges indicates that more fragmented edges were pushed into larger clusters, becoming dominated. This corresponds to the intuition that more informative hyperedges, which are likely to be dominated, should lie inside clusters rather than between clusters. The change in cuts illustrated in the plot reflects the greater improvement in clustering quality shown in the preceding experiments.
\subsection{Scalability of the Hypergraph Louvain method}

To further motivate the extension of modularity maximization methods to the hypergraph clustering problem, we look at the scalability of the \textit{Hypergraph-Louvain} method against the strongest baseline, \textit{Hypergraph-Spectral}.

Table 3 shows the CPU times for the \textit{Hypergraph-Louvain} and \textit{Hypergraph-Spectral} on the real-world datasets. We see that while the difference is less pronounced on a smaller dataset like (\textit{TwitterFootball}, it is much greater on the larger datasets. In particular, the runtime on Arnetminer for \textit{Hypergraph-Louvain} is lower by a significant margin, not having to compute an expensive eigendecomposition.

\textbf{Analysis on synthetic hypergraphs: }On the real-world data, modularity maximization showed improved scalability as the dataset size increased. To evaluate this trend, we compared the CPU times for the \textit{Hypergraph-Spectral} and \textit{Hypergraph-Louvain} methods on synthetic hypergraphs of different sizes. For each hypergraph, we first ran \textit{Hypergraph-Louvain} and found the number of clusters returned, then ran the \textit{Hypergraph-Spectral} method with the same number of clusters. 

Following the method used in EDRW\footnote{https://github.com/HariniA/EDRW}\cite{Satchidanand2015ExtendedDR}, we generated hypergraphs with 2 classes and a homophily of 0.4 (40\% of the hyperedges deviate from the expected class distribution). The hypergraph followed a modified power-law distribution, where 75\% of its hyperedges contained less than 3\% of the nodes, 20\% of its hyperedges contained 3\%-50\% of the nodes, and the remaining 5\% contained over half the nodes in the dataset. To generate a hypergraph, we first set the number of hyperedges to 1.5 times the number of nodes. For each hyperedge, we sampled its size $k$ from the modified power-law distribution and chose $k$ different nodes based on the homophily of the hypergraph. We generated hypergraphs of sizes ranging from 1000 nodes up to 10000 nodes, at intervals of 500 nodes.

Figure \ref{fig-runtime} shows how the CPU time varies with the number of nodes, on the synthetic hypergraphs generated as given above.

\begin{figure}[h] 
\centering
\includegraphics[width=0.90\linewidth]{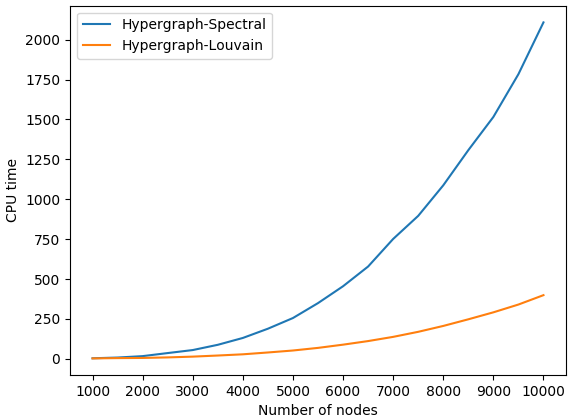}
\caption{CPU time on synthetic hypergraphs}
\label{fig-runtime}
\end{figure}

\begin{figure*}[t!p] \label{binningplots}
\centering
         \begin{subfigure}[b]{0.48\textwidth}
                 \centering
                 \includegraphics[width=0.98\textwidth]{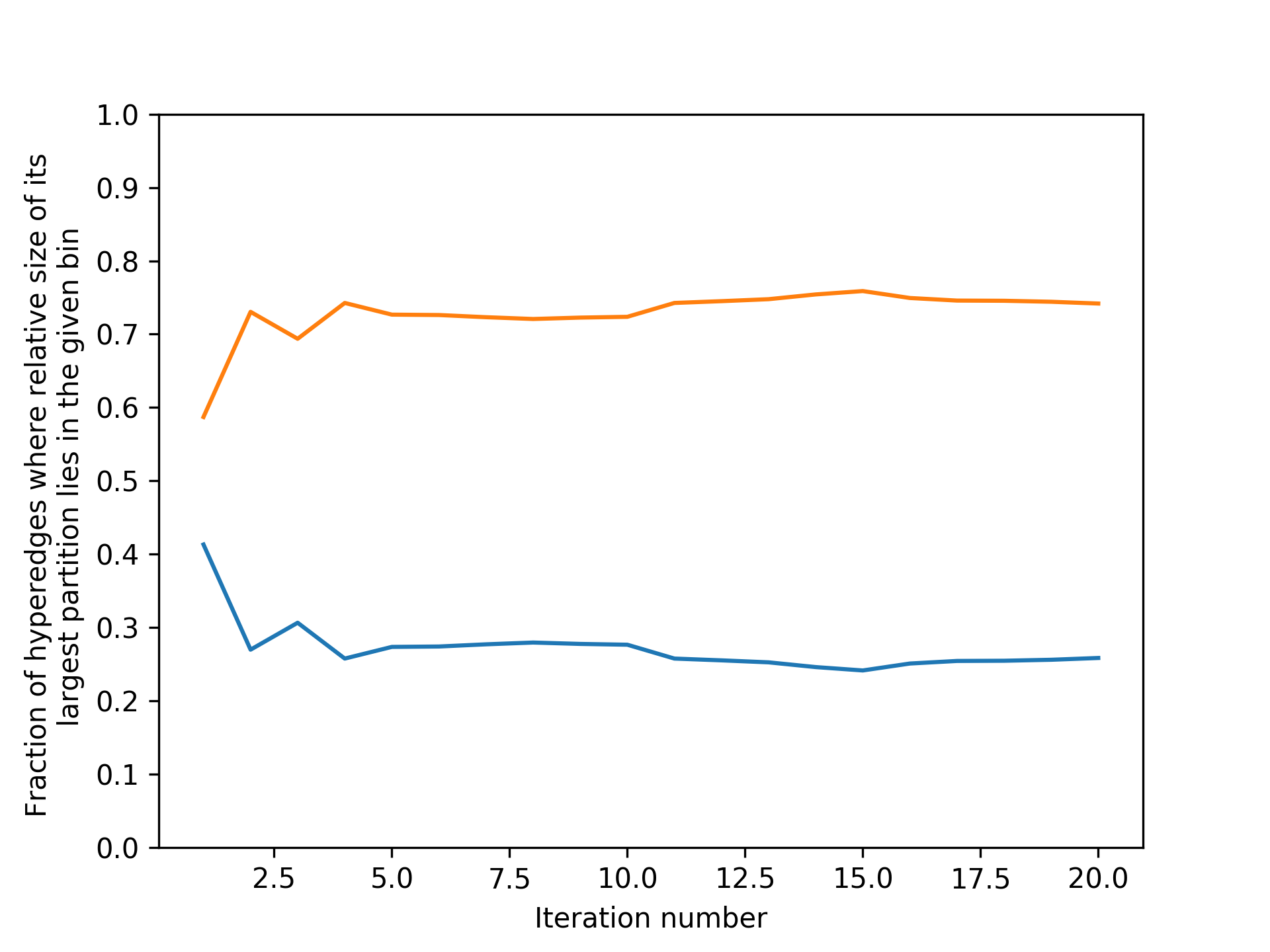}
                 \caption{Citeseer}
                 \label{citeseer:bins}
         \end{subfigure}
         \begin{subfigure}[b]{0.48\textwidth}
                 \centering
                 \includegraphics[width=0.98\textwidth]{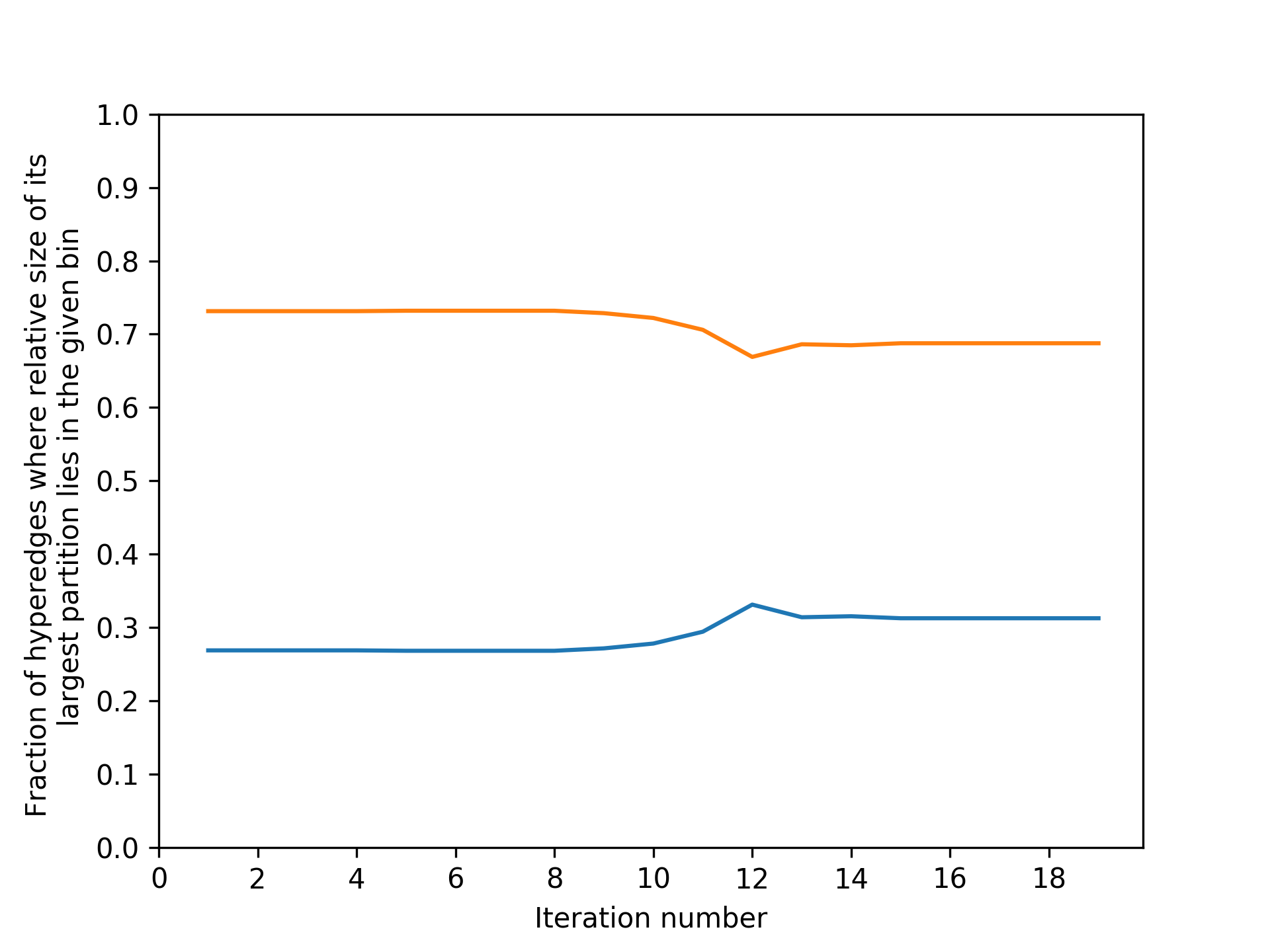}
                 \caption{Cora}
                 \label{cora:bins}
         \end{subfigure}

         \begin{subfigure}[b]{0.48\textwidth}
                 \centering
                 \includegraphics[width=0.98\textwidth]{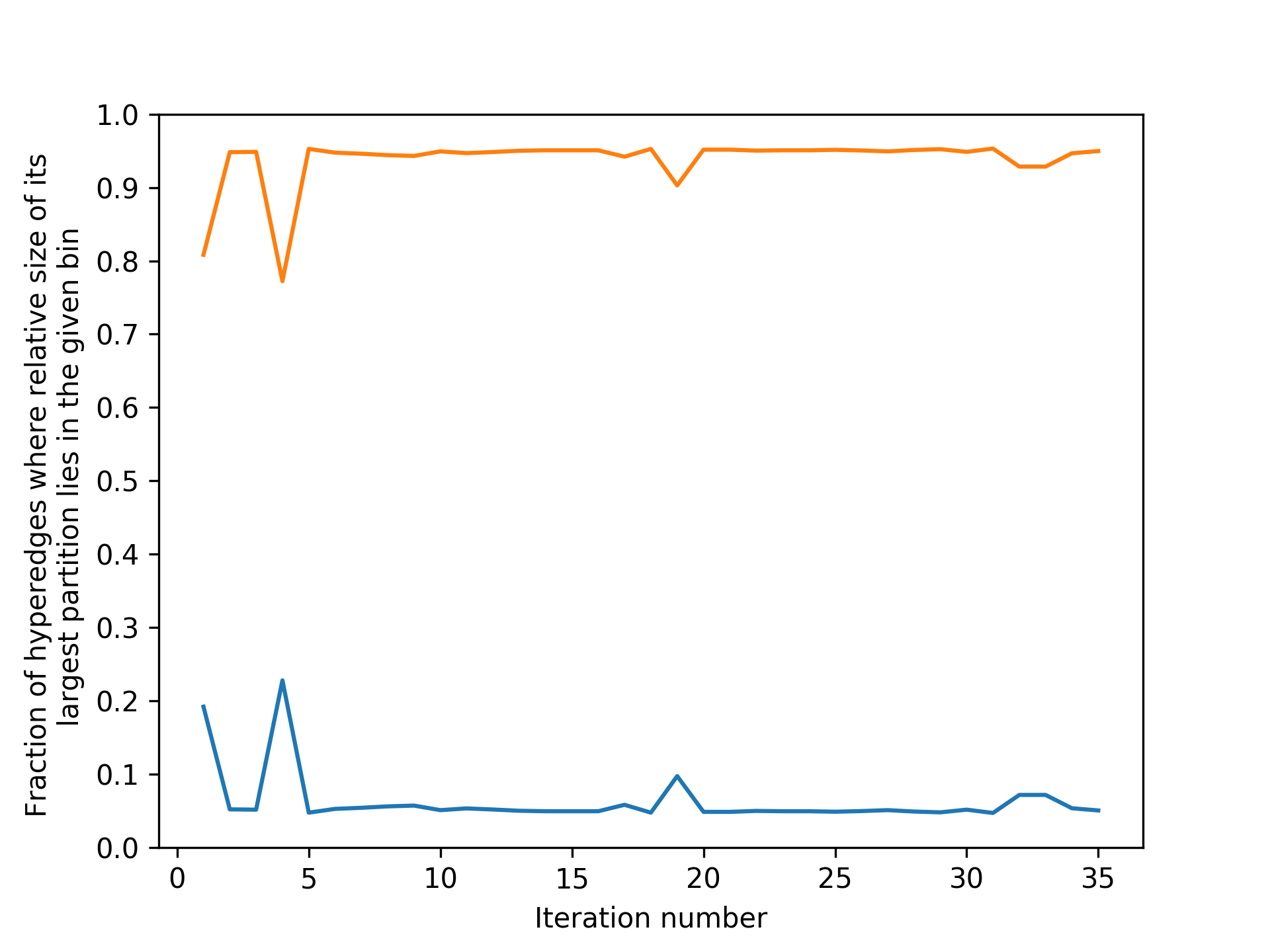}
                 \caption{MovieLens}
                 \label{movielens:bins}
         \end{subfigure}
         \begin{subfigure}[b]{0.48\textwidth}
                 \centering
                 \includegraphics[width=0.98\textwidth]{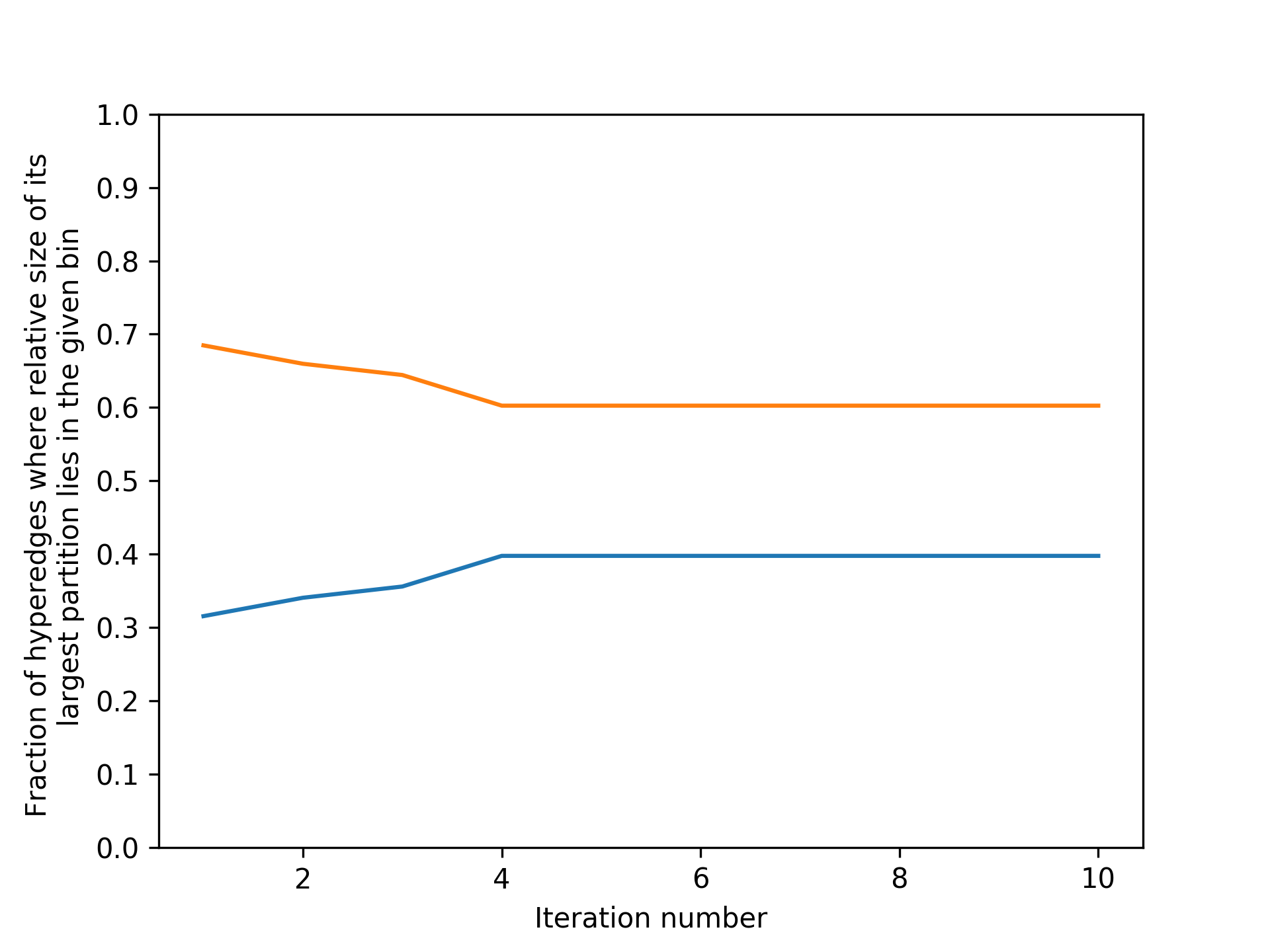}
                 \caption{TwitterFootball}
                 \label{football:bins}
         \end{subfigure}

         \begin{subfigure}[b]{0.48\textwidth}
                 \centering                \includegraphics[width=0.98\textwidth]{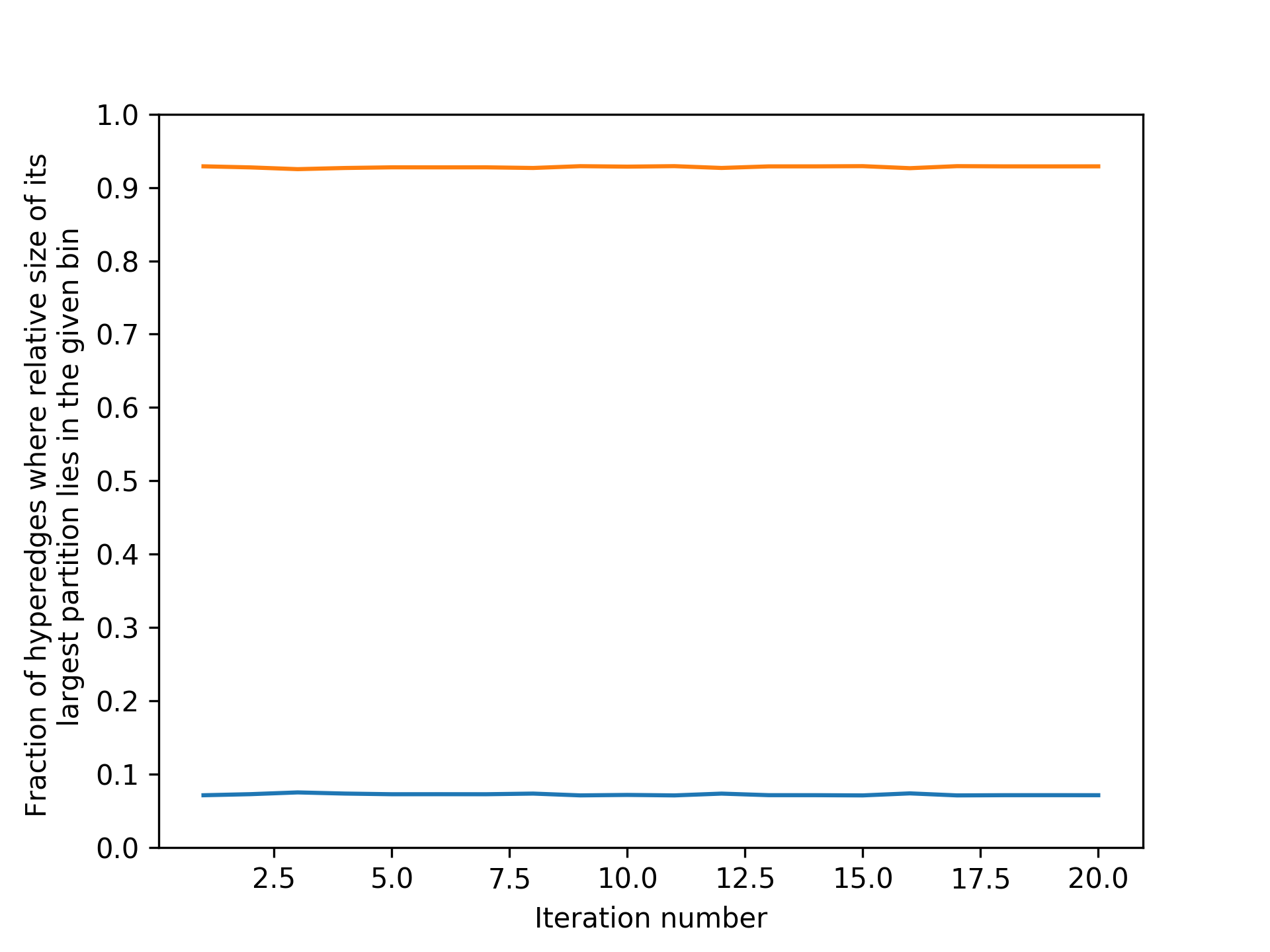}
                 \caption{Arnetminer}
                 \label{arnet:bins}
         \end{subfigure}
         \begin{subfigure}[b]{0.48\textwidth}
                 \centering
                 \includegraphics[width=0.98\textwidth]{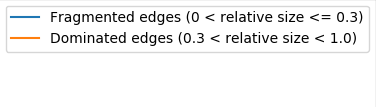}
                 \label{legend:bins}
                 \caption*{}
         \end{subfigure}
         \label{fig:bins}
\caption{Effect of iterative hyperedge reweighting: \% of hyperedges where the relative size of its largest partition falls in a given bin vs. no. of iterations}
\end{figure*}

 While \textit{Hypergraph-Louvain} is shown to run consistently faster than \textit{Hypergraph-Spectral} for the same number of nodes, the difference increases as the hypergraph grows larger. In Figure \ref{fig-runtime}, this is shown by the widening in the gap between the two curves as the number of nodes increases.

\section{Conclusion}

In this work, we have considered the problem of modularity maximization on hypergraphs. In presenting a modularity function for hypergraphs, we derived a node degree preserving graph reduction and a hypergraph null model. To make use of additional hyperedge information, we proposed a novel algorithm, Iteratively Reweighted Modularity Maximization (IRMM). This is based on a hyperedge reweighting procedure that refines the cuts induced by the clustering method. Empirical evaluations on real-world data and synthetic data illustrated the performance and scalability of our method.

The modularity function on graphs is based on a null model that preserves node degree alone. While we have incorporated additional information from the hyperedge cut into the clustering framework, other hyperedge-centric information and constraints are yet to be explored.
\bibliographystyle{ACM-Reference-Format}
\bibliography{references}

\end{document}